\documentclass[11pt]{article}

\usepackage{acl}

\usepackage{times}
\usepackage{latexsym}
\usepackage{etoc}
\usepackage{booktabs}
\usepackage[table]{xcolor}
\usepackage{multirow}
\usepackage{array}
\usepackage{makecell}
\usepackage{caption}
\usepackage[most]{tcolorbox}
\usepackage{xcolor}

\usepackage{amsmath,amssymb,amsthm}

\definecolor{lightpurple}{rgb}{0.862, 0.918, 0.992}

\newtheorem{theorem}{Theorem}[section]

\newtheorem{definition}[theorem]{Definition}
\newtheorem{assumption}[theorem]{Assumption}
\usepackage[ruled,vlined]{algorithm2e}

\usepackage[T1]{fontenc}

\usepackage[utf8]{inputenc}

\usepackage{microtype}

\usepackage{inconsolata}

\usepackage{graphicx}

\title{Rethinking Data Mixing from the Perspective of Large Language Models}

\author{
  Yuanjian Xu\textsuperscript{1}, 
  Tianze Sun\textsuperscript{3}, 
  Changwei Xu\textsuperscript{2}, 
  XinLong Zhao\thanks{China Mining Group}, 
  Jianing Hao\textsuperscript{1}, \\
  \textbf{Ran Chen}\textsuperscript{2}, 
  \textbf{Yang Liu}\protect\footnotemark[1], 
  \textbf{Ruijie Xu}\textsuperscript{2}, 
  \textbf{Stephen Chen}\textsuperscript{2}, 
  \textbf{Guang Zhang}\textsuperscript{1,}\thanks{Corresponding author.} \\
  \textsuperscript{1}Hong Kong University of Science and Technology (Guangzhou) \\
  \textsuperscript{2}OpenCSG \quad \textsuperscript{3}Harbin Institute of Technology University\\
  \texttt{\{yxu085@connect, guangzhang@\}hkust-gz.edu.cn}
}
\begin{document}

\maketitle

\begin{abstract}
Data mixing strategy is essential for large language model (LLM) training. Empirical evidence shows that inappropriate strategies can significantly reduce generalization. Although recent methods have improved empirical performance, several fundamental questions remain open: what constitutes a domain, whether human and model perceptions of domains are aligned, and how domain weighting influences generalization. We address these questions by establishing formal connections between gradient dynamics and domain distributions, offering a theoretical framework that clarifies the role of domains in training dynamics. Building on this analysis, we introduce DoGraph, a reweighting framework that formulates data scheduling as a graph-constrained optimization problem.
Extensive experiments on GPT-2 models of varying scales demonstrate that DoGraph consistently achieves competitive  performance. Code and data are publicly available at \url{https://anonymous.4open.science/r/Dograph-53B9}.

\end{abstract}

\section{Introduction}

Training data fundamentally determines the capability of large language models (LLMs)~\citep{xu2023hard,wettig2024qurating,albalaksurvey}.
However, domain distributions are imbalanced due to unequal data availability: web-scale corpora are abundant, whereas specialized domains remain scarce~\citep{gao2021pile}.
This raises a key question—can we design a principled sampling strategy to mitigate such imbalance?
Exhaustively searching over all possible sampling policies is infeasible, as LLM training is prohibitively expensive. To make progress, we must first answer: what does a “domain” truly mean for a LLM, and are human and model perceptions of domains aligned~\citep{sun2025domain2vec}?

Prior data mixing studies have predominantly relied on domain definitions derived from human intuition. Existing approaches can be broadly categorized into two lines of work. The first derives heuristics from small- or medium-scale models and then scales them to LLMs~\citep{liu2024regmix,ye2024mixinglaws,fan2023doge,xie2023doremi}; however, empirical evidence shows that scaling laws and domain sensitivities observed in small models do not transfer reliably to larger ones~\citep{kang2024autoscale}. The second directly performs data reweighting or optimization on LLMs, either at the sample or domain level~\citep{sun2025domain2vec,sow2025dynamic}, but often incurs prohibitive computational costs or relies on unrealistic assumptions.

In this work, we argue that the optimization of LLMs continuously reshapes their domain perception, creating a mismatch between human-defined and model-internal representations~\citep{bengio2013representation}.
Figure~\ref{fig:domain_evolution} visualizes this evolution: at initialization, samples from domains such as C4, Wikipedia, Book, and ArXiv form well-separated clusters, reflecting strong domain-specific biases.
As training progresses, these clusters gradually merge into an approximately isotropic distribution, indicating that the model internalizes more domain-invariant linguistic structures~\citep{power2022grokking,gao2019representation}.

\begin{figure*}[t]
\centering
\includegraphics[width=\textwidth]{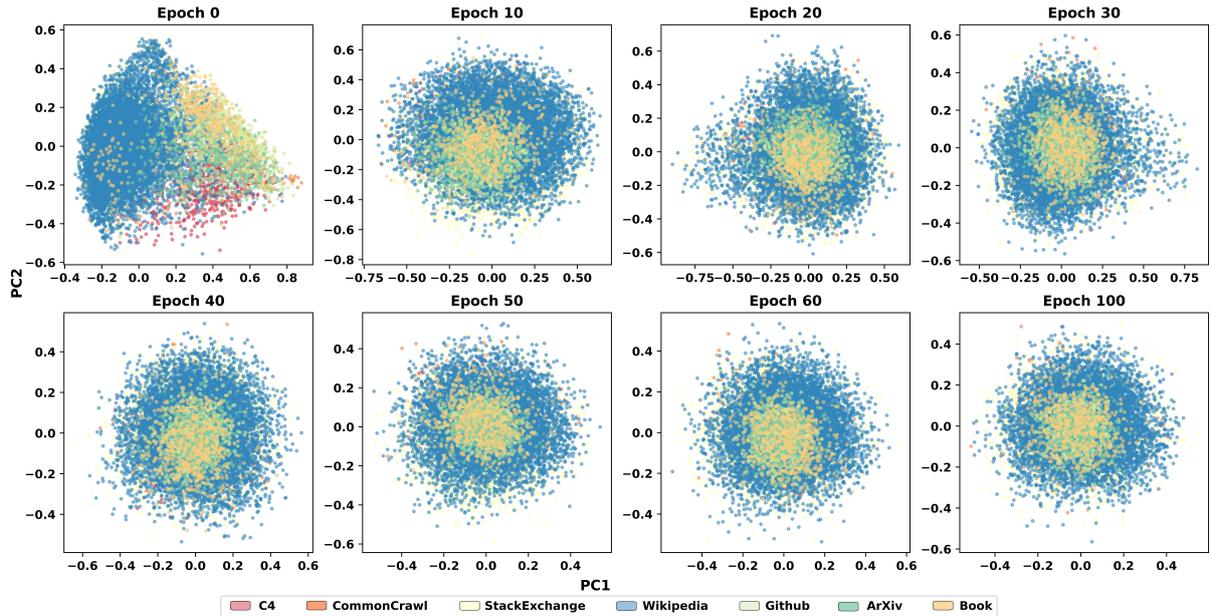}
\caption{
PCA projections of gradient directions at different training epochs. Colors denote data domains (C4, Wikipedia, ArXiv, Book, etc.). Initially, gradients form distinct clusters, showing strong domain bias. Over time, they overlap, indicating that the model homogenizes its domain perception. Experiments use 20\% of SlimPajama trained on GPT2-Mini.
}
\label{fig:domain_evolution}
\end{figure*}

This evolving misalignment biases existing data mixing methods.
To address it, we formally link domain distributions with gradient dynamics, showing how model-defined domains emerge during optimization~\citep{koh2017understanding,fort2019deep}.
Building on this foundation, we propose \emph{DoGraph}, which formulates domain scheduling as a graph-constrained reweighting problem.
DoGraph models the model-perceived domains as graph nodes and learns their weights through optimization.
Our main contributions are summarized as follows: 1) We theoretically establish a connection between domain distribution and gradient dynamics, and empirically validate the dynamic correction of domain representations during LLM training. 2) We propose DoGraph, a graph-constrained reweighting framework that formalizes domain scheduling as an optimization problem. DoGraph is strongly grounded in theoretical principles. 3) We conduct extensive experiments across diverse benchmarks, demonstrating consistent improvements in both performance and domain balance, which validate the competitiveness of our approach.

\section{Methods}
In this section, we begin by redefining domains from a learning-theoretic perspective.  
We then establish their connection to gradients, showing that distributional differences are reflected in gradient geometry in Section~\ref{sec:rethinking_the_definition_of_domain}. Finally, we build on this insight to propose the \emph{DoGraph}.
\subsection{Rethinking the Definition of Domain}  
\label{sec:rethinking_the_definition_of_domain}
In NLP, the notion of domain has often been unclearly defined, particularly in the training corpora of LLMs, where such boundaries are increasingly blurred.  
Before developing concrete strategies for domain weighting, it is essential to first clarify what we mean by a domain.  We argue that a domain should be defined from the model’s perspective, namely as the distribution of inputs it perceives, rather than from a human perspective.  
\begin{tcolorbox}[colback=lightpurple, colframe=lightpurple, sharp corners=all, boxrule=0mm, boxsep=0.5mm, left=1.5mm, right=1.5mm, top=1.5mm, bottom=1.5mm] 
\begin{definition}
\label{sec:definition}
Let $\mathcal V$ be a finite vocabulary and $\mathcal X = \mathcal V^*$ the space of all token sequences. A \emph{domain} is a probability space $(\mathcal X,\mathcal F,P_X)$, where $\mathcal F$ is the $\sigma$-algebra on $\mathcal X$ and $P_X$ a probability measure. Two domains $\mathcal D_1=(\mathcal X,\mathcal F,P_1)$ and $\mathcal D_2=(\mathcal X,\mathcal F,P_2)$ are distinct iff $P_1 \neq P_2$. 
\end{definition} 
\end{tcolorbox}
We now formulate the definition of domain as stated in Definition~\ref{sec:definition}.  
Each $x \in \mathcal X$ is a finite token sequence from $\mathcal V$, with domains distinguished by the regions of $\mathcal X$ where their distributions $P_X$ concentrate. In simple cases, such as distinguishing code from natural language, these regions are relatively easy to separate.  
However, in practice, many domains are much less clear-cut, with boundaries that overlap or gradually shift.  
Thus, domains differ through probability measures over the same space $\mathcal V^*$ rather than through disjoint supports.   

\paragraph{Connection between Domain and Gradients}  
A central question is whether domains can be inferred from observable data instead of being imposed a priori,  
as such assumptions inevitably risk introducing bias.  
Since each training sample affects learning only via its gradient, the model perceives not raw token frequencies but the geometry of gradient flows.  
To investigate this, we analyze a simplified self-attention structure in which the Transformer can be linearized,  
leading to a tractable correspondence between distributions and gradients.  

\begin{tcolorbox}[colback=lightpurple, colframe=lightpurple, sharp corners=all, boxrule=0mm, boxsep=0.5mm, left=1.5mm, right=1.5mm, top=1.5mm, bottom=1.5mm] 
\begin{theorem}
\label{thm:dist-grad}
Under the linearized Transformer setting,  
for any parameter block $b\in\{V,Q,K,O,W\}$ and two domains $P_1, P_2$,  
the difference of expected gradients satisfies
\(
\bar g_b(P_1)-\bar g_b(P_2) 
= \int \nabla_{W_b} L(x,y;\theta)\,
   (P_1-P_2)(dx,dy).
\)
Moreover, this difference admits a kernel representation:
\[
\|\bar g_b(P_1)-\bar g_b(P_2)\|^2 
= \mathrm{MMD}_{k_b}^2(P_1,P_2),
\]
where $k_b(s,s')=\langle g_b(s),g_b(s')\rangle$ is the gradient-induced kernel.  
\end{theorem}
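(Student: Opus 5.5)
The plan is to treat the statement as two consequences of the same fact: the expected gradient is a linear functional of the data distribution, and its squared norm expands into a double integral of the gradient inner product. Throughout, write $s=(x,y)$ for a sample and $g_b(s)=\nabla_{W_b}L(x,y;\theta)$ for the gradient block at fixed parameters $\theta$. I will assume that in the linearized Transformer setting $g_b(s)$ is a measurable map into a finite-dimensional Euclidean space (the parameter block with the Frobenius inner product), and that it is square-integrable under both $P_1$ and $P_2$. The linearization is used only to guarantee these regularity properties: the loss gradient is then a polynomial in the embedded tokens with bounded sequence features. The identity itself needs no further structure.

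\textbf{Step 1 (linearity).} By definition $\bar g_b(P)=\int g_b(s)\,P(ds)$, which is a Bochner integral and well defined because $\mathbb E_P\|g_b\|<\infty$. The map $P\mapsto\int g_b\,dP$ is linear in the measure. Applying it to the finite signed measure $P_1-P_2$ gives $\bar g_b(P_1)-\bar g_b(P_2)=\int g_b(s)\,(P_1-P_2)(ds)$, which is exactly the first claim.

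\textbf{Step 2 (kernel expansion).} Expand the squared norm by bilinearity of the inner product and Fubini:
\[
\Big\|\int g_b\,d(P_1-P_2)\Big\|^2=\iint \langle g_b(s),g_b(s')\rangle\,(P_1-P_2)(ds)\,(P_1-P_2)(ds').
\]
Fubini is justified because $|\langle g_b(s),g_b(s')\rangle|\le \|g_b(s)\|\,\|g_b(s')\|$, and this bound is integrable against $|P_1-P_2|\otimes|P_1-P_2|$ by the integrability assumption. Writing $k_b(s,s')=\langle g_b(s),g_b(s')\rangle$ and multiplying out the signed measure gives
\[
\mathbb E_{P_1\otimes P_1}k_b+\mathbb E_{P_2\otimes P_2}k_b-2\,\mathbb E_{P_1\otimes P_2}k_b .
\]

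\textbf{Step 3 (identification with MMD).} First note that $k_b$ is a positive semidefinite kernel, since it is the Gram kernel of the explicit feature map $\phi=g_b$. Its RKHS $\mathcal H_b$ consists of the functions $s\mapsto\langle w,g_b(s)\rangle$, and the mean embedding of $P$ is $\mu_P=\langle \bar g_b(P),g_b(\cdot)\rangle$. Hence
\[
\mathrm{MMD}_{k_b}(P_1,P_2)=\sup_{\|f\|_{\mathcal H_b}\le1}\big(\mathbb E_{P_1}f-\mathbb E_{P_2}f\big)=\sup_{\|w\|\le1}\langle w,\bar g_b(P_1)-\bar g_b(P_2)\rangle.
\]
By Cauchy--Schwarz this supremum equals $\|\bar g_b(P_1)-\bar g_b(P_2)\|$, attained at $w$ equal to the normalized difference. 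One caveat: the norm on $\mathcal H_b$ is the quotient norm modulo the kernel of $w\mapsto\langle w,g_b(\cdot)\rangle$. This changes nothing, because the relevant direction $\bar g_b(P_1)-\bar g_b(P_2)$ lies in the closed span of the range of $g_b$. Squaring the result matches the expansion in Step 2, which is the standard closed form of $\mathrm{MMD}^2$.

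\textbf{Main obstacle.} The algebra is routine; the only step needing care is the integrability hypothesis that licenses Fubini and the Bochner integrals. Since $\mathcal X=\mathcal V^*$ contains sequences of unbounded length, $\|g_b\|$ could grow with sequence length. I would handle this with the linearized setting in one of two ways: either assume bounded context length, which makes the sample space effectively finite so every integral is a finite sum, or assume a finite second moment of the sequence features. Either choice gives $\mathbb E_{P_i}\|g_b\|^2<\infty$, and with that in hand the argument applies uniformly to each block $b\in\{V,Q,K,O,W\}$.
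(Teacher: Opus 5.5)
Your proof is correct and follows essentially the same route as the paper. Linearity of the Bochner integral gives the first identity, and the bilinearity/Fubini expansion of $\|\int g_b\,d(P_1-P_2)\|^2$ yields $\mathbb E_{P_1\otimes P_1}k_b+\mathbb E_{P_2\otimes P_2}k_b-2\,\mathbb E_{P_1\otimes P_2}k_b$, which the paper takes as the definition of $\mathrm{MMD}^2_{k_b}$. The paper also derives the explicit linearized-attention gradients and factors $g_b(s)=\mathcal L_b(x)\rho(s)$, but, as you observe, the identity never uses that structure, and your explicit Fubini domination bound and Step 3 recovery of the supremum form of MMD make your version slightly more complete.
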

\end{tcolorbox}

Theorem~\ref{thm:dist-grad} shows that distributional differences are encoded in the geometry of gradients,  
implying that domains can be compared through their gradient signatures rather than token-level statistics.  
From this perspective, a domain is defined by its expected gradient flow,  
and training can be understood as a continual refinement of the model’s perception of domains,  
with each update adjusting how distributions are represented in gradient space.  

\subsection{DoGraph}
We argue that data weighting should adapt to the model’s evolving perception of domains, 
rather than fixed human-defined boundaries. 
Building on this idea, we introduce the \emph{DoGraph}, 
where each domain corresponds to a node in a graph. 
At every epoch, we collect per-sample gradients and \textbf{project them into a low-dimensional subspace via random projection}.
Next, we apply K-means clustering in the projected gradient space to obtain model-centric partitions of the training distribution.
This partition evolves over training, reflecting the changing geometry of gradients.

Formally, let \( g_i \in \mathbb{R}^d \) be the gradient of the \(i\)-th sample and 
\( G = [g_1, \dots, g_n]^\top \in \mathbb{R}^{n \times d} \). 
We apply a random projection matrix \( R \in \mathbb{R}^{d \times k} \) 
with \( R_{pq} \sim \mathcal{N}(0, 1/k) \), yielding
\(
\tilde g_i = R^\top g_i, \quad \text{or } \tilde G = G R.
\)
By the Johnson--Lindenstrauss lemma,
\[
(1 - \epsilon)\|g_i - g_j\|_2^2 
\le \|\tilde g_i - \tilde g_j\|_2^2 
\le (1 + \epsilon)\|g_i - g_j\|_2^2,
\]
ensuring that clustering in the projected space preserves the gradient geometry 
while reducing computational cost and noise. Clustering \(\{\tilde g_i\}\) into \(m\) groups \( \{D_1, \dots, D_m\} \), 
we compute each domain’s mean gradient as
\(
\bar g_j = \frac{1}{|D_j|}\sum_{i \in D_j} \tilde g_i.
\)
To balance learning, we assign adaptive domain weights \(w = (w_1, \dots, w_m)\) by solving
\(
\min_{w \in \Delta^{m-1}} 
\; \mathcal{L}_{\text{opt}}\!\left(\sum_{j=1}^m w_j \bar g_j\right),
\)
where \( \Delta^{m-1} \) is the probability simplex.

\paragraph{DoGraph Pipeline}  
At each epoch, per-sample gradients are first extracted and projected into a low-dimensional subspace via random projection,  
then clustered into domains in the projected space.  
Domain mean gradients are aggregated through an optimization step that computes the optimal domain weights.  
The model parameters are updated with the weighted gradient, and the process repeats,  
allowing both the partition of domains and their relative importance to adapt continuously throughout training.  
The choice of the optimization objective $\mathcal{L}_{\text{opt}}$   
is discussed in the Appendix~\ref{sec:more_analysis_about}. Algorithm~\ref{alg:dograph} summarizes the overall procedure of DoGraph. 
More implementation details can be found in Appendix~\ref{sec:dograph_pipeline}.

\section{Experiment Results}
In this section, we begin by outlining the experimental setup,  
after which we present the overall performance analysis.  
We further conduct a perplexity analysis and investigate how model scale influences 
the observed trends, with detailed results presented in Section~\ref{sec:perplexity_analysis} 
and Section~\ref{sec:model_scale_influence}. All main results in the paper are reported using the GPT-2 Medium. To isolate the effects of architecture and parameter scale, additional experiments with the LLaMA-1.1B model are deferred to the Appendix~\ref{sec:Scaling to Larger Datasets and Model Sizes}. Sensitivity to hyperparameters and the choice of optimizer are analyzed in Appendix~\ref{sec:more_analysis_about} and Appendix~\ref{sec:impact}.

\begin{table*}[!t]
\centering
\small
\resizebox{2\columnwidth}{!}{
\setlength{\tabcolsep}{4pt}
\begin{tabular}{l cccccc | cc | c | c}
\toprule
& \multicolumn{6}{c|}{\textbf{Commonsense / Reasoning}} & \multicolumn{2}{c|}{\textbf{RC}} & \multicolumn{1}{c|}{\textbf{LM}} & \textbf{Avg} \\
\cmidrule(lr){2-7} \cmidrule(lr){8-9} \cmidrule(lr){10-10} \cmidrule(lr){11-11}
\textbf{Method} & HellaSwag & PiQA & OBQA & COPA & LogiQA & WinoG & SciQ & ARC-E & Lambada &  \\
\midrule
\multicolumn{11}{c}{SlimPajama} \\
\midrule
Uniform              & 26.1 & 55.5 & 11.7 & 58.0 & 25.7 & 49.9 & 49.0 & 31.4 & 11.6 & 35.4 \\
Dynamic Loss-Based   & 26.6 & 56.8 & 13.8 & 59.0 & \textbf{29.8} & 50.1 & 53.3 & 31.7 & 13.4 & 37.2 \\
DoReMi               & 26.4 & 55.7 & 12.2 & 59.0 & 27.2 & 49.9 & 53.3 & 32.3 & 12.7 & 36.5 \\
DOGE                 & 26.2 & 55.8 & 11.5 & 62.0 & 27.2 & 50.4 & 52.8 & 31.3 & 11.6 & 36.5 \\
RegMix               & 26.1 & 55.6 & 13.2 & 60.0 & 23.7 & 50.0 & 46.6 & 31.7 & 14.0 & 35.7 \\
Data Mixing Law      & 26.5 & 54.5 & 13.0 & 62.0 & 24.4 & 49.1 & 45.2 & 32.0 & 12.0 & 35.4 \\
\midrule
\rowcolor{gray!20}
DoGraph (Ours) 
& \textbf{27.3} & \textbf{56.9} & \textbf{14.8} & \textbf{63.0} & 26.3 & \textbf{50.8} 
& \textbf{53.5} & \textbf{33.9} & \textbf{14.5} & \textbf{37.9} \\
\bottomrule
\end{tabular}
}
\caption{
Downstream benchmark results (\textbf{accuracy \%}) on SlimPajama (GPT-2 Medium).
Tasks grouped into \textbf{Commonsense/Reasoning}, \textbf{Reading Comprehension}, and \textbf{Language Modeling}.
Best results highlighted in bold.
}
\label{tab:benchmark_joint}
\end{table*}
\subsection{Experiments Setup}
\label{sec:exp_setup}
Our experiments use decoder-only, Transformer-based language models \cite{vaswani2017attention, radford2019language} at 210M and 300M scales.  Models are trained on SlimPajama \cite{soboleva2023slimpajama}, spanning seven text domains.  
We evaluate DoGraph on nine stable benchmarks and compare with representative baselines.  
Model details, training protocol, and baseline breakdown are in Appendix~\ref{app:exp_details}.

\subsection{Results in the Pretraining Stage}
\label{sec:results_in_the_pretrainng}

As shown in Table~\ref{tab:benchmark_joint}, DoGraph achieves more balanced learning across domains and delivers consistent gains over all baselines.  
It yields the largest improvements on reasoning-oriented benchmarks, highlighting the advantage of its structured weighting mechanism in capturing logical and commonsense dependencies across domains.  
Moreover, the performance gains on reading comprehension tasks, which require semantic consistency and information integration, demonstrate that DoGraph’s adaptive data scheduling enhances semantic alignment and improves overall generalization.

\subsection{Perplexity Analysis}
\label{sec:perplexity_analysis}
Table~\ref{tab:pretrain_results} shows validation perplexity on SlimPajama under various domain-mixing strategies.
Uniform sampling performs moderately but fails to balance domain frequencies.
Loss-based weighting and prior methods (DoReMi, DOGE) yield unstable gains, overfitting to high-resource domains and degrading on long-tail data.
RegMix and Data Mixing Law worsen this trend, with higher perplexity despite larger models.
\textbf{DoGraph} achieves the best perplexity , reflecting balanced domain integration and strong generalization.

\begin{table}[htbp]
\centering
\small
\resizebox{\columnwidth}{!}{
\setlength{\tabcolsep}{6pt}
\begin{tabular}{lc}
\toprule
\textbf{Method} & \textbf{SlimPajama (Val PPL ↓)} \\
\midrule
Uniform              & 4.13 \\
DYNAMIC LOSS-BASED   & 3.10 \\
DoReMi               & 3.30 \\
DOGE                 & 3.31 \\
RegMix               & 4.51 \\
Data Mixing Law      & 4.50 \\
\midrule
\rowcolor{gray!20}
\textbf{DoGraph (Ours)} & \textbf{3.09} \\
\bottomrule
\end{tabular}
}
\caption{Pre-training results on SlimPajama.
Validation perplexity (PPL) comparison across domain-mixing strategies. 
Lower values indicate better generalization. 
}
\label{tab:pretrain_results}
\end{table}

\subsection{DoGraph Stability across Model Scales}
\label{sec:model_scale_influence}
As shown in Figure~\ref{fig:domain_bias}, validation perplexity decreases with model scale, but the rate of improvement depends on the reweighting strategy.  
Uniform weighting yields consistently high perplexity, while RegMix offers partial gains that diminish as models grow.  
DoGraph achieves the lowest perplexity across all scales, validating its ability to dynamically balance domains.
\vspace{-8pt}
\begin{figure}[ht]
    \centering
    \includegraphics[width=0.8\linewidth]{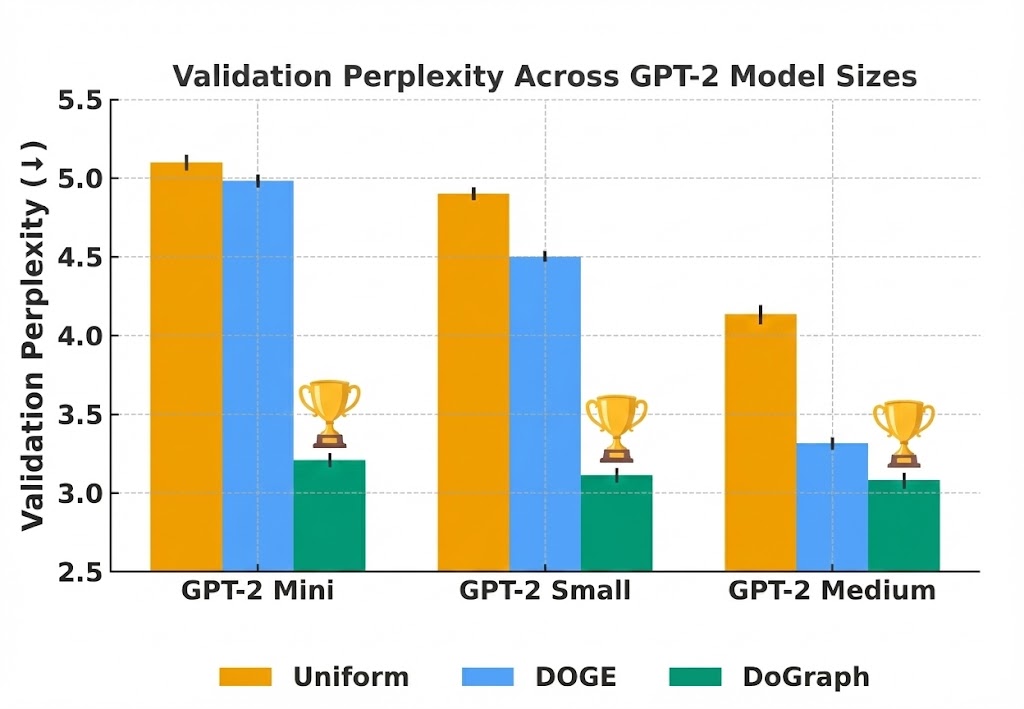}
    \caption{Perplexity across GPT-2 model sizes. }
    \label{fig:domain_bias}
\end{figure}

\section{Conclusion}
We revisited data mixing for LLMs through the lens of gradient dynamics.  
By characterizing domain differences via gradient geometry, we proposed \textbf{DoGraph}, 
a graph-constrained reweighting framework that adaptively balances domains during training.  
Experiments across model scales and benchmarks show that DoGraph improves both domain balance and generalization.  
Our results suggest that domains should be defined by the model’s evolving representation 
rather than human intuition.

\section{Limitations}
While DoGraph achieves consistent improvements across domains and already reduces computational overhead through randomized gradient projection, its efficiency can still be further optimized. Future work will explore more lightweight aggregation and weighting strategies to enhance scalability in large-scale training.

\bibliography{acl_2025}
\clearpage
\appendix
\section{Appendix}
\addcontentsline{toc}{section}{Appendix}
\etocsettocstyle{\section*{Contents}}{} 
\etocsetnexttocdepth{subsection} 
\localtableofcontents 
\subsection{Connections to Prior Work}
We categorize data mixture optimization into two main paradigms: offline and online approaches.

Offline approaches predefine mixture ratios before training. Early scaling-law studies~\citep{kaplan2020scaling,hoffmann2022chinchilla} established the relationship between model size, data volume, and compute, motivating subsequent work that explicitly models how mixture composition affects performance. Methods such as DoReMi~\citep{xie2023doremi}, RegMix~\citep{liu2024regmix}, and Mixing Laws~\citep{ye2024mixinglaws} optimize mixture ratios using proxy models or learned predictors, improving efficiency but requiring retraining when datasets change. Other efforts focus on heuristic sample scoring to derive refined data mixtures~\citep{gu2024data}, distinct from large-scale corpora that offer fixed domain ratios for benchmarking~\citep{gao2021pile,baevski2024datacomp,li2024dolma}. Domain2Vec~\citep{sun2025domain2vec} further introduces dataset vectorization and distribution alignment, enabling mixture optimization without proxy models.

Online approaches adjust mixtures adaptively during training. Representative methods such as Group-DRO~\citep{sagawa2020dro} dynamically reweight domains to improve worst-case generalization under distribution shift. While effective, they rely on explicit domain labels and are costly to scale.

\subsection{Experimental Details}
\label{app:exp_details}
\paragraph{Benchmarks.}
We evaluate our method on nine diverse downstream benchmarks to assess its real-world impact. Guided by prior work \cite{mehta2024} and our own observations, we selected these tasks for their performance stability, excluding volatile benchmarks like RTE. The chosen tasks are HellaSwag \cite{zellers2019hellaswag}, PiQA \cite{bisk2020piqa}, OpenBookQA \cite{mihaylov2018openbookqa}, Lambada \cite{paperno2016lambada}, SciQ \cite{welbl2017sciq}, ARC-Easy \cite{clark2018arc}, COPA \cite{sarlin2020copa}, LogiQA \cite{liu2020logiqa}, and WinoGrande \cite{sakaguchi2021winogrande}. All evaluations use the \texttt{lm-eval-harness} \cite{gao2023lm-eval-harness}, and we report normalized accuracy where available, otherwise standard accuracy.

\paragraph{Baselines.}
To rigorously assess the effectiveness of our proposed method, DoGraph, we benchmark it against a diverse set of reweighting baselines spanning three levels of granularity. We first include the uniform mixing baseline, where all samples contribute equally, as a fundamental reference. We then compare DoGraph with state-of-the-art domain-level reweighting methods, including DoGE \cite{fan2023doge}, DoReMi \cite{xie2023doremi}, Regmix \cite{liu2024regmix}, and Data Mixing Law \cite{ye2024mixinglaws}. Finally, to evaluate performance at a finer granularity, we incorporate a representative sample-level reweighting approach, Dynamic Loss-based Sample Reweighting \cite{sow2025dynamic}.

\begin{table*}[!t]
\centering
\small
\resizebox{2\columnwidth}{!}{
\setlength{\tabcolsep}{4pt}
\begin{tabular}{l cccccc | cc | c | c}
\toprule
& \multicolumn{6}{c|}{\textbf{Commonsense / Reasoning}} & \multicolumn{2}{c|}{\textbf{RC}} & \multicolumn{1}{c|}{\textbf{LM}} & \textbf{Avg} \\
\cmidrule(lr){2-7} \cmidrule(lr){8-9} \cmidrule(lr){10-10} \cmidrule(lr){11-11}
\textbf{Method} & HellaSwag & PiQA & OBQA & COPA & LogiQA & WinoG & SciQ & ARC-E & Lambada &  \\
\midrule
\multicolumn{11}{c}{The Pile} \\
\midrule
Uniform              & 29.5 & 58.8 & 27.3 & 65.8 & 23.9 & 50.5 & 60.3 & 40.0 & 11.7 & 40.9 \\
Dynamic Loss-Based   & 29.0 & 57.7 & 26.4 & 64.3 & 22.8 & 49.3 & 60.0 & 38.9 & 10.2 & 39.9 \\
DoReMi               & 29.4 & 58.3 & 27.3 & \textbf{67.5} & 26.4 & 52.2 & 61.6 & 40.6 & 12.1 & 41.7 \\
DOGE                 & 29.2 & 58.5 & 27.1 & 64.5 & 23.2 & 49.8 & 60.1 & 40.0 & 11.7 & 40.5 \\
RegMix               & 29.2 & \textbf{59.3} & 27.3 & 65.2 & 25.8 & \textbf{53.1} & 62.8 & \textbf{41.7} & 14.2 & 42.1 \\
Data Mixing Law      & 29.2 & 58.8 & 26.9 & 67.2 & 23.6 & 50.4 & 58.6 & 39.0 & 11.9 & 40.6 \\
\midrule
\rowcolor{gray!20}
DoGraph (Ours)       & \textbf{29.8} & 59.2 & \textbf{27.8} & 65.0 & \textbf{28.3} & 51.2 & \textbf{66.1} & 39.2 & \textbf{15.9} & \textbf{42.5} \\
\bottomrule
\end{tabular}
}
\caption{
Downstream benchmark results (\textbf{accuracy \%}) on The Pile (LLaMA-1.1B).
Tasks are grouped into \textbf{Commonsense/Reasoning}, \textbf{Reading Comprehension}, and \textbf{Language Modeling}.
Our method, DoGraph, achieves consistently better and more balanced results across domains,
demonstrating its competitiveness and generalization ability. The best results are highlighted in bold.
}
\label{tab:benchmark_joint_1}
\end{table*}

\begin{table*}[!t]
\centering
\small
\resizebox{2\columnwidth}{!}{
\setlength{\tabcolsep}{4pt}
\begin{tabular}{l cccccc | cc | c | c}
\toprule
& \multicolumn{6}{c|}{\textbf{Commonsense / Reasoning}} & \multicolumn{2}{c|}{\textbf{RC}} & \multicolumn{1}{c|}{\textbf{LM}} & \textbf{Avg} \\
\cmidrule(lr){2-7} \cmidrule(lr){8-9} \cmidrule(lr){10-10} \cmidrule(lr){11-11}
\textbf{Method} & HellaSwag & PiQA & OBQA & COPA & LogiQA & WinoG & SciQ & ARC-E & Lambada &  \\
\midrule
\multicolumn{11}{c}{The Pile} \\
\midrule
Uniform              & 29.6 & 58.8 & 29.4 & 66.0 & 25.9 & 51.1 & 61.0 & 39.1 & 12.6 & 41.5 \\
Dynamic Loss-Based   & 29.3 & 58.1 & 29.6 & 66.1 & 25.0 & 52.5 
& 62.7 & 39.9 & 12.2 & 41.7 \\
DoReMi               & 29.6 & 58.4 & 29.8 & 66.0 & 24.9 & 51.4 & 61.1 & 40.5 & 12.8 & 
41.6 \\
DOGE                 & \textbf{29.7} & 56.9 & 29.2 & 64.0 & 25.5 & 50.6 & 61.7 & 40.6 & 11.9 & 41.1 \\
RegMix               & 29.4 & 59.5 & 29.4 & 66.5 & 25.1 & \textbf{53.6} & 62.5 & \textbf{41.2} & 12.3 & 42.2 \\
Data Mixing Law      & 29.3 & 58.4 & \textbf{30.2} & 65.9 & 25.6 & 51.3 & 61.3 & 40.2 & 12.1 & 41.6 
\\
\midrule
\rowcolor{gray!20}
DoGraph (Ours)       & 29.6 & \textbf{60.5} & 29.0 & \textbf{67.0} & \textbf{29.7} & 51.4 & \textbf{65.2} & 40.2 & \textbf{15.1} & \textbf{43.1} \\
\bottomrule
\end{tabular}
}
\caption{
Downstream benchmark results (\textbf{accuracy \%}) on The Pile (LLaMA-3.2-3B).
Tasks are grouped into \textbf{Commonsense/Reasoning}, \textbf{Reading Comprehension}, and \textbf{Language Modeling}.
Our method, DoGraph, achieves consistently better and more balanced results across domains,
demonstrating its competitiveness and generalization ability. The best results are highlighted in bold.
}
\label{tab:benchmark_joint_2}
\end{table*}

\begin{table*}[!t]
\centering
\small
\resizebox{2\columnwidth}{!}{
\setlength{\tabcolsep}{4pt}
\begin{tabular}{l cccccc | cc | c | c}
\toprule
& \multicolumn{6}{c|}{\textbf{Commonsense / Reasoning}} & \multicolumn{2}{c|}{\textbf{RC}} & \multicolumn{1}{c|}{\textbf{LM}} & \textbf{Avg} \\
\cmidrule(lr){2-7} \cmidrule(lr){8-9} \cmidrule(lr){10-10} \cmidrule(lr){11-11}
\textbf{Method} & HellaSwag & PiQA & OBQA & COPA & LogiQA & WinoG & SciQ & ARC-E & Lambada &  \\
\midrule
\multicolumn{11}{c}{SlimPajama} \\
\midrule
Uniform              & 26.0 & 55.4 & 13.8 & 57.2 & 22.8 & 49.3 & 32.6 & 30.6 & 12.0 & 33.3 \\
Dynamic Loss-Based   & 26.2 & 56.1 & 13.2 & 55.3 & 26.0 & 49.2 & \textbf{53.8} & 31.8 & 12.6 & 36.2 \\
DoReMi               & 26.1 & 55.7 & 12.3 & 53.5 & \textbf{26.8} & 48.8 & 52.4 & 30.9 & 12.4 & 35.4 \\
DOGE                 & 26.2 & 55.0 & 14.4 & \textbf{60.5} & 23.5 & 49.0 & 31.1 & 30.8 & 11.4 & 33.5 \\
RegMix               & 26.0 & 54.3 & 13.3 & 58.0 & 24.1 & \textbf{49.8} & 38.7 & 29.8 & 12.5 & 34.1 \\
Data Mixing Law      & 26.1 & 56.3 & 13.4 & 59.2 & 24.5 & 48.9 & 39.6 & 30.1 & 12.4 & 34.5 \\
\midrule
\rowcolor{gray!20}
DoGraph (Ours)       & \textbf{26.3} & \textbf{57.5} & \textbf{14.6} & 58.0 & 26.0 & 49.7 & \textbf{53.8} & \textbf{32.3} & \textbf{12.8} & \textbf{36.4} \\
\bottomrule
\end{tabular}
}
\caption{
Downstream benchmark results (\textbf{accuracy \%}) on SlimPajama (GPT-2 Small).
Tasks are grouped into \textbf{Commonsense/Reasoning}, \textbf{Reading Comprehension}, and \textbf{Language Modeling}.
Our method, DoGraph, achieves consistently better and more balanced results across domains,
demonstrating its competitiveness and generalization ability. The best results are highlighted in bold.
}
\label{tab:benchmark_joint_3}
\end{table*}

\paragraph{Training Datasets.}
Our training data strategy is designed to align dataset scale with model capacity.  
For all GPT-2 models, we utilize the \textbf{SlimPajama-6B} dataset \cite{yoon2023slimpajama}, a 6-billion-token corpus comprising seven diverse domains: ArXiv, Books, Common Crawl, C4, GitHub, StackExchange, and Wikipedia.  
The byte proportion of each source is detailed in Table~\ref{tab:slimpajama_6b_composition}, illustrating the composition of the data mixture used for training.  
For all LLaMA models, we  conduct our experiments using the domains of the Pile dataset \cite{gao2021pile} depicted in Table~\ref{tab:the_pile_composition}. Due to copyright concerns, we utilize the 17 subsets available on HuggingFace that do not violate copyright issues.
These datasets provide a balanced and diverse text distribution suitable for evaluating cross-domain generalization in medium-scale language models.

\begin{table}[h!]
    \centering
    \begin{tabular}{lr}
        \toprule
        \textbf{Data Source} & \textbf{Byte Proportion} \\
        \midrule
        Common Crawl    & 54.1\% \\
        C4              & 28.7\% \\
        GitHub          & 4.2\%  \\
        Books           & 3.7\%  \\
        ArXiv           & 3.4\%  \\
        Wikipedia       & 3.1\%  \\
        StackExchange   & 2.8\%  \\
        \bottomrule
    \end{tabular}
\caption{Byte proportion of data sources in the SlimPajama-6B dataset.}
\label{tab:slimpajama_6b_composition}
\end{table}

\definecolor{rowgray}{gray}{0.9}
\newcommand{\gc}{\cellcolor{rowgray}}

\begin{table}[ht]
\centering
\begin{tabular}{lr}
\toprule
\textbf{Component} & \textbf{Effective Size} \\
\midrule
Pile-CC & 227.12 GiB \\
PubMed Central & 180.55 GiB \\
\rowcolor{rowgray} Books3 & 151.44 GiB \\
\rowcolor{rowgray} OpenWebText2 & 125.54 GiB \\
ArXiv & 112.42 GiB \\
Github & 95.16 GiB \\
FreeLaw & 76.73 GiB \\
Stack Exchange & 64.39 GiB \\
USPTO Backgrounds & 45.81 GiB \\
PubMed Abstracts & 38.53 GiB \\
Gutenberg (PG-19) & 27.19 GiB \\
\rowcolor{rowgray} OpenSubtitles & 19.47 GiB \\
Wikipedia (en) & 19.13 GiB \\
DM Mathematics & 15.49 GiB \\
Ubuntu IRC & 11.03 GiB \\
\rowcolor{rowgray} BookCorpus2 & 9.45 GiB \\
EuroParl & 9.17 GiB \\
HackerNews & 7.80 GiB \\
\rowcolor{rowgray} YoutubeSubtitles & 7.47 GiB \\
PhilPapers & 4.76 GiB \\
NIH ExPorter & 3.79 GiB \\
Enron Emails & 1.76 GiB \\
\bottomrule
\end{tabular}
\caption{Overview of the Pile dataset with datasets that are no longer available due to copyright issues marked in gray. Merged into a single column list.}
\label{tab:the_pile_composition}
\end{table}


\paragraph{Model Architecture.}
Following prior studies~\cite{liu2024regmix, sow2025dynamic}, we consider both model architecture and model scale in our evaluation, as summarized in Table~\ref{tab:model-arch}.  
Specifically, we evaluate two decoder-only Transformer models based on GPT-2 architecture and two models based on LLaMA architecture, ranging from lightweight to medium scales.

\begin{table}[htbp]
\centering
\resizebox{\columnwidth}{!}{%
\begin{tabular}{lcccc}
\toprule
 & GPT-2 Small & GPT-2 Medium & LLaMA-1.1B & LLaMA-3.2-3B\\
\midrule
Parameters      & 210M  & 300M  & 1.1B  & 3B  \\
Layers          & 24    & 36  & 22  & 28    \\
Attention Heads & 16    & 24  & 32  & 24    \\
Embedding Dim.  & 768   & 768  & 2048  & 8192   \\
Hidden Dim.     & 3072  & 3072  & 2048  & 3072  \\
Max Seq. Length & 512   & 512  & 2048  & 131072 \\
\bottomrule
\end{tabular}%
}
\caption{Model architectures used in our experiments.}
\label{tab:model-arch}
\end{table}

\paragraph{Training Process.}
Following standardized practices in prior work, we train all models under protocols summarized in Table~\ref{tab:train-hparams-vertical}.  
Specifically, we adopt a linear warmup cosine schedule with identical weight decay (0.01) and gradient clipping (1.0) across all model scales,  
while adjusting batch size and training steps according to model capacity.  
This setup ensures that each model is trained sufficiently to convergence.

\begin{table}[htbp]
\centering
\resizebox{\columnwidth}{!}{%
\begin{tabular}{lcccc}
\toprule
 & GPT-2 Small & GPT-2 Medium & TinyLLaMA-1.1B & LLaMA-3.2-3B\\
\midrule
Minibatch Size            & 48   & 48   & 64   & 64 \\
Learning Rate $(\times 10^{-3})$ & 0.50 & 0.50 & 0.50 & 0.50 \\
Learning Rate End $(\times 10^{-4})$ & 1.0 & 1.0 & 1.0 & 1.0 \\
Warmup Steps              & 500  & 500  & 500  & 500  \\
$r$                       & 0.4  & 0.4  & 0.4  & 0.4  \\
Training Steps            & 20{,}000 & 20{,}000 & 25{,}000 & 25{,}000 \\
Total Documents Seen       & 960{,}000 & 960{,}000 & 1280{,}000 & 1280{,}000 \\
\bottomrule
\end{tabular}%
}
\caption{Training hyperparameters for GPT-2 and LLaMA models in our benchmark evaluations.}
\label{tab:train-hparams-vertical}
\end{table}

\subsection{DoGraph Pipeline}
\label{sec:dograph_pipeline}
Formalized in Algorithm~\ref{alg:dograph}, the process begins by projecting high-dimensional per-sample gradients $g_i^{(t)} \in \mathbb{R}^d$ into a lower-dimensional subspace $\tilde g_i^{(t)} \in \mathbb{R}^k$ using a random Gaussian matrix $R^{(t)}$, where we set $k=5000$ for both SlimPajama and The Pile to preserve the gradient manifold's geometric properties per the Johnson-Lindenstrauss Lemma. Subsequently, we identify latent optimization structures by applying K-means clustering to these projected signals, partitioning the mini-batch into $m=11$ model-centric domains $\{D_j^{(t)}\}$ and computing their respective centroid gradients $\bar g_j^{(t)}$. Finally, importance weights $w^{(t)} \in \Delta^{m-1}$ are determined by solving the auxiliary objective $\mathcal{L}_{\text{opt}}$, and the model parameters $\theta$ are updated via the weighted aggregate $\sum_{j=1}^m w_j^{(t)} \bar g_j^{(t)}$, effectively decoupling training dynamics from static, pre-defined domain labels.
\begin{algorithm}[htbp]
\caption{Dograph Pipeline}
\label{alg:dograph}
\KwIn{Training data $\mathcal D$, parameters $\theta$, number of clusters $m$, projection dimension $k$, epochs $T$, learning rate $\eta$}
\KwOut{Trained parameters $\theta^*$, domain weights $\{w^{(t)}\}$}

\For{$t=1$ \KwTo $T$}{
  Sample random projection matrix $R^{(t)} \in \mathbb{R}^{d \times k}$ with $R_{pq}^{(t)} \sim \mathcal{N}(0, 1/k)$\;
  Compute per-sample gradients $g_i^{(t)} = \nabla_\theta L(x_i, y_i; \theta^{(t-1)})$\;
  Project gradients: $\tilde g_i^{(t)} = R^{(t)\top} g_i^{(t)}$\;
  Cluster $\{\tilde g_i^{(t)}\}$ into $m$ domains $\{D_1^{(t)}, \dots, D_m^{(t)}\}$\;
  Compute domain mean gradients $\bar g_j^{(t)} = \frac{1}{|D_j^{(t)}|} \sum_{i \in D_j^{(t)}} \tilde g_i^{(t)}$\;
  Optimize weights $w^{(t)} = \arg\min_{w \in \Delta^{m-1}} \mathcal{L}_{\text{opt}}\!\left(\sum_{j=1}^m w_j \bar g_j^{(t)}\right)$\;
  Update model parameters: $\theta^{(t)} \gets \theta^{(t-1)} - \eta \sum_{j=1}^m w_j^{(t)} \bar g_j^{(t)}$\;
}
\Return{$\theta^{(T)}, \{w^{(t)}\}_{t=1}^T$}
\end{algorithm}

\subsection{Scaling to Larger Datasets and Model Sizes}
\label{sec:Scaling to Larger Datasets and Model Sizes}
We report results on GPT-2 models from 210M to 300M parameters and a 6B-token SlimPajama subset, as shown in Table~\ref{tab:benchmark_joint_3}. DoGraph is scale-free and does not rely on any model-size–specific assumptions. All components, including gradient extraction, random projection, clustering, and domain-level optimization, operate directly on per-step gradients and thus scale linearly with model size. The method does not require proxy models, validation-model fitting, or domain-specific metadata, making it naturally compatible with billion-parameter LLMs.
To further prove these, we pretrain LLaMA-1.1B and LLaMA-3B from scratch under the same DoGraph pipeline, as shown in Table~\ref{tab:benchmark_joint_1} and Table~\ref{tab:benchmark_joint_2}.

\subsection{Clustering Visualization}
\label{sec:Clustering Visualization}
As shown in Figure~\ref{fig:model_centric_domains}, while human-defined domains (indicated by colors) become indistinguishable later in training, DoGraph successfully extracts $m$ latent structures from this mixture, proving that model-centric domains are composed of heterogeneous data sources.

\begin{figure*}[t]
\centering
\includegraphics[width=\textwidth]{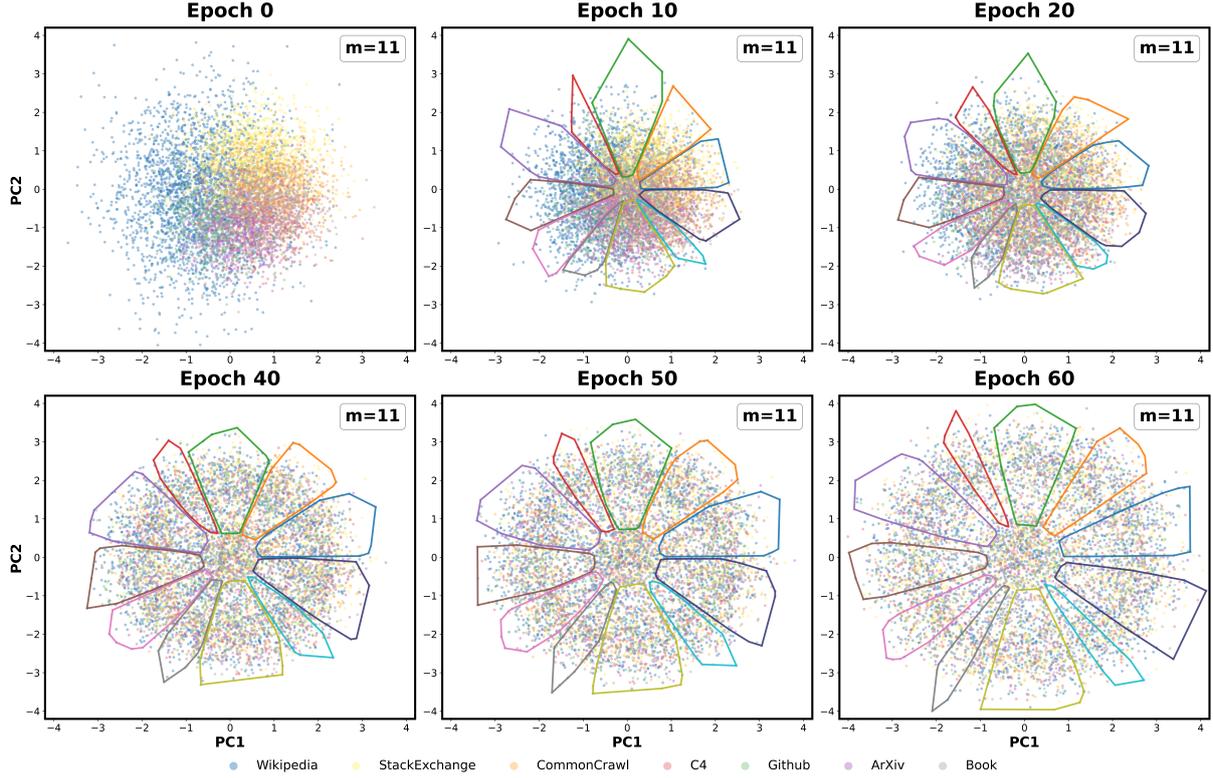}
\caption{
Evolution of per-sample gradients and the emergence of model-centric structures. Points are colored by their original source datasets. Initially, gradients are separated by domain bias. Over time, the model homogenizes its perception of these sources, leading to significant overlap. Despite this mixing, DoGraph identifies m=11 distinct model-centric domains within the gradient space. Experiments use 20\% of SlimPajama trained on GPT2-Mini.
}
\label{fig:model_centric_domains}
\end{figure*}

\subsection{More Analysis about the Choice of Optimization Function}
\label{sec:more_analysis_about}
At each training epoch, the DoGraph framework computes domain mean gradients $\{\bar g_j\}_{j=1}^m$
and determines their adaptive weights $w \in \Delta^{m-1}$ by minimizing an auxiliary objective $\mathcal{L}_{\text{opt}}$.
Since $m$ (the number of domains) is typically small, this optimization occurs in a low-dimensional space and can be efficiently solved in closed or iterative form.
We discuss several representative objectives and their corresponding solvers below.

\noindent\textbf{Gradient variance minimization.}
To balance the learning progress across domains, one may minimize the variance of gradient magnitudes while maintaining the global descent direction:
\[
\mathcal{L}_{\text{opt}}(w)
= \mathrm{Var}_{j}\!\big[\|\bar g_j\|_2\big]
+ \lambda \Big\|\sum_{j=1}^m w_j \bar g_j\Big\|_2^2.
\]
This convex quadratic problem can be solved by projected gradient descent or quadratic programming with a simplex constraint.

\noindent\textbf{Robust min--max objective.}
When robustness against hard or under-represented domains is desired, one may adopt a distributionally robust formulation:
\[
\mathcal{L}_{\text{opt}}(w)
= \tau \log \!\sum_{j=1}^m 
   \exp\!\big(\|\bar g_j\|_2 / \tau\big),
\]
which smoothly approximates $\max_j \|\bar g_j\|_2$.
The optimal weights admit a closed-form softmax solution
$w_j \propto \exp(\|\bar g_j\|_2 / \tau)$.

\noindent\textbf{Gradient alignment regularization.}
To encourage consistent update directions across domains,
we define
\[
\mathcal{L}_{\text{opt}}(w)
= - \sum_{j=1}^m w_j 
    \cos(\bar g_j,\;\bar g),
\quad 
\bar g = \sum_{j=1}^m w_j \bar g_j.
\]
Although non-convex due to the dependence of $\bar g$ on $w$,
it can be efficiently solved by a few fixed-point iterations:
each step updates $w_j$ in proportion to the cosine similarity between $\bar g_j$ and the current aggregate $\bar g$.

\noindent\textbf{Domain uncertainty weighting.}
Alternatively, if each domain exhibits distinct gradient variability, 
we estimate its intra-domain variance
$\sigma_j^2 = \mathrm{Var}_{i\in D_j}\!\big[\|g_i - \bar g_j\|_2^2\big]$
and assign weights inversely proportional to it:
\[
\mathcal{L}_{\text{opt}}(w)
= \Big\|\sum_j w_j \bar g_j\Big\|_2^2
+ \beta \sum_j \sigma_j^2 w_j.
\]
This convex quadratic form admits a closed-form Newton update or
can be solved by a projected Frank--Wolfe method.

Table~\ref{tab:opt_objectives} summarizes the computational complexity of each optimization objective
and the corresponding validation perplexity PPL.
All variants share the same backbone and differ only in the choice of $\mathcal{L}_{\text{opt}}$.
The robust softmax objective achieves the lowest computational cost,
while the uncertainty-weighted variant attains the best overall performance.

\begin{table}[h]
\centering
\caption{Comparison of optimization objectives in DoGraph.}
\label{tab:opt_objectives}
\small
\begin{tabular}{lcc}
\toprule
\textbf{Method} & \textbf{Complexity} & \textbf{PPL} \\
\midrule
DoGraph (variance)         & $O(m^2)$ & 3.24 \\
DoGraph (robust softmax)   & $O(m)$   & 3.31 \\
DoGraph (alignment)        & $O(m^2)$ & 3.15 \\
DoGraph (uncertainty)      & $O(m^2)$ & \textbf{3.09} \\
\bottomrule
\end{tabular}
\end{table}

\noindent
Among all variants, \textbf{DoGraph (uncertainty)} achieves the lowest perplexity,
indicating that weighting domains by intra-domain gradient stability
provides the most consistent optimization dynamics.

\subsection{Impact of Cluster Granularity $m$.}
\label{sec:impact}
We investigate the sensitivity of model performance to the number of clusters m. As illustrated in Figure 1, the validation perplexity exhibits a clear U-shaped trend with respect to m. Performance initially improves as m increases from 7 to 11, suggesting that moderately finer-grained, model-centric domains better capture coherent gradient structures and facilitate optimization. However, further increasing m beyond 11 leads to a significant performance degradation. This decline is likely due to over-partitioning, which fragments the gradient space and splits coherent patterns into inconsistent components, thereby weakening signal consistency. Consequently, we select m=11 as our default setting for all subsequent experiments.
\begin{figure}[t] 
    \centering
    \includegraphics[width=1\linewidth]{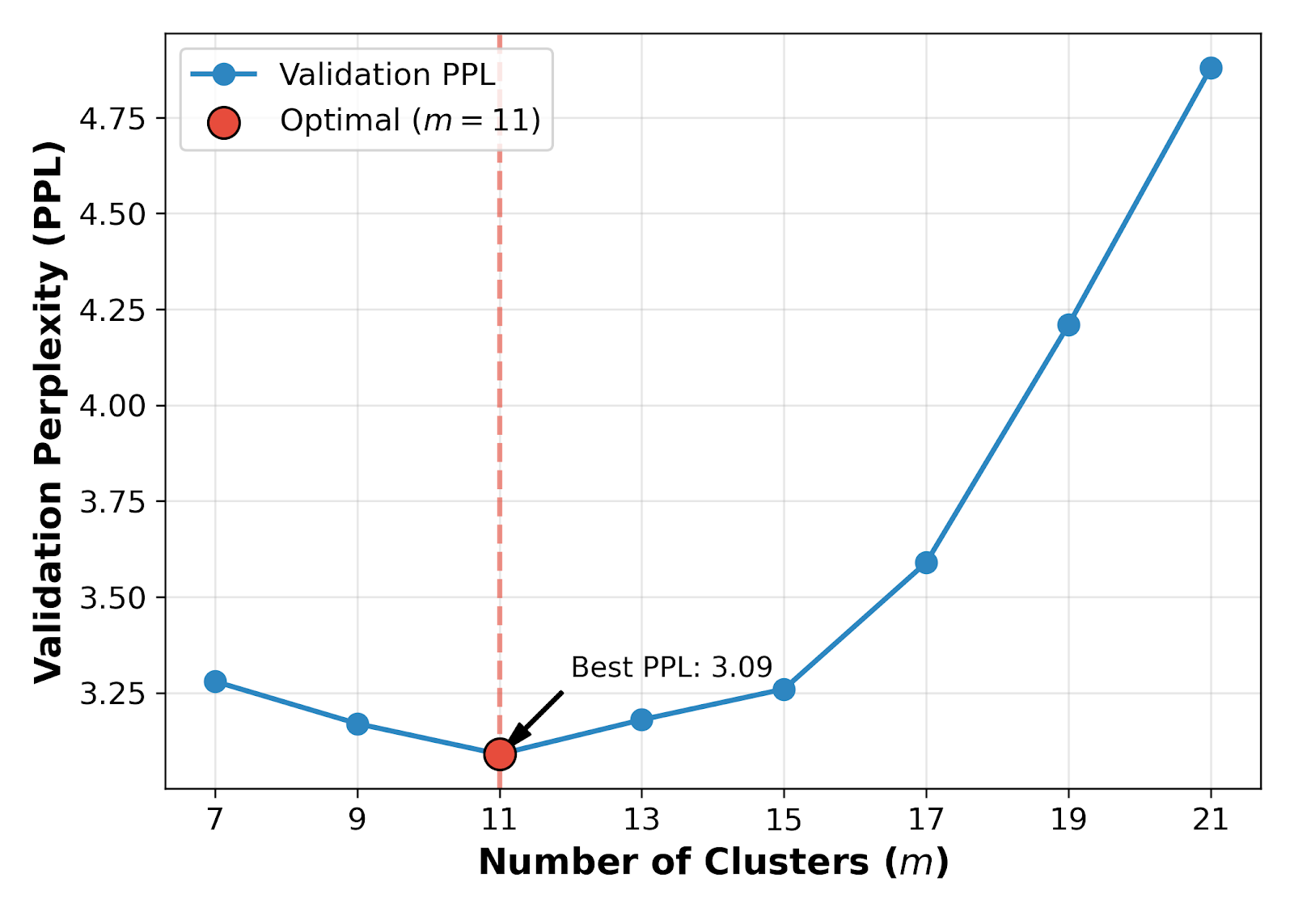}
    \caption{Impact of cluster granularity $m$ on validation perplexity. The U-shaped curve demonstrates that $m=11$ provides the optimal balance; insufficient granularity fails to resolve gradient structures, while excessive partitioning leads to signal inconsistency.}
    \label{fig:cluster_granularity}
\end{figure}
\subsection{Computational Efficiency Analysis}
As shown in Figure~\ref{fig:efficiency_h200}, \texttt{dograph} achieves state-of-the-art performance while introducing a modest and practical computational overhead. On a $2\times$ H200 GPU cluster, our method completes pre-training in 20.37 hours, corresponding to a 4.51\% increase in runtime compared to \texttt{regmix}. This incremental cost falls within the commonly accepted budget for large-scale pre-training.
\begin{figure}[t]
    \centering
    \includegraphics[width=0.9\linewidth]{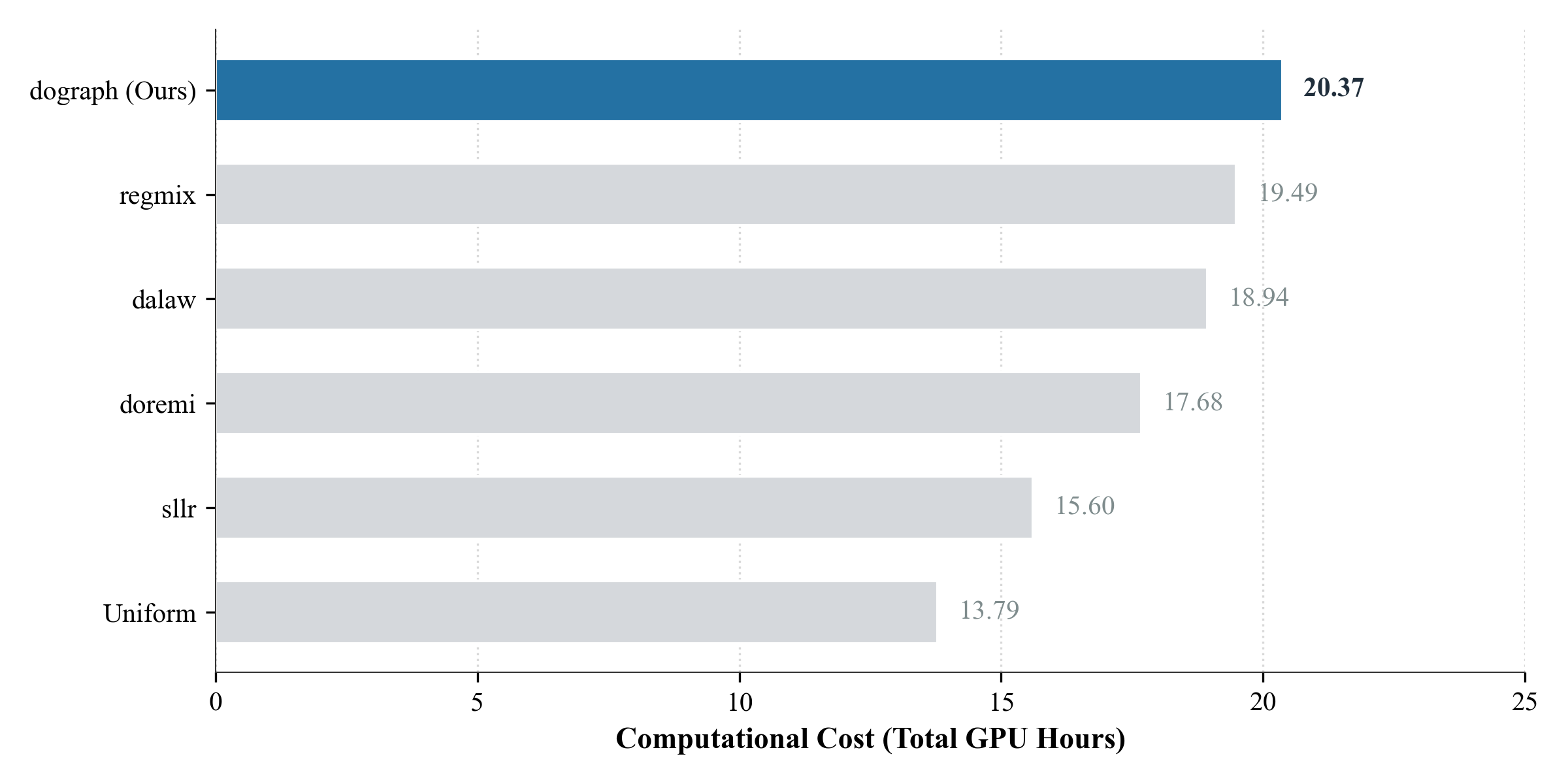}
    \caption{Pre-training GPT-2 Mini on SlimPajama under a 100B-Token Computational Budget. We report the total training time (GPU hours) using $2\times$ NVIDIA H200 GPUs. While our \texttt{dograph} method introduces a sophisticated data-driven decision process, the resulting overhead is minimal (only 4.51\% over \texttt{regmix}), while establishing a new  SOTA performance baseline. The marginal increase in budget is well-justified by the superior convergence quality and data selection efficiency.}
    \label{fig:efficiency_h200}
\end{figure}

\subsection{Proofs}
\begin{tcolorbox}[colback=lightgray, colframe=lightgray, sharp corners=all, boxrule=0mm, boxsep=0.5mm, left=1.5mm, right=1.5mm, top=1.5mm, bottom=1.5mm]
\begin{assumption}[Linearized Attention Mechanism]\label{assump:attn}
Let \(X \in \mathbb{R}^{n \times d}\) denote the sequence representation.
We define the projected queries, keys, and values as
\(Q = XW_Q\), \(K = XW_K\), and \(V = XW_V\),
and the scaled similarity matrix as
\(S = \tfrac{1}{\sqrt{d_k}} QK^\top\).
The row-wise softmax of \(S\) is approximated by its first-order linearization:
\(
P ~\approx~ A + \tfrac{1}{\tau n} C S,
\quad \text{where} \quad
A = \tfrac{1}{n}\mathbf{1}\mathbf{1}^\top, \ 
C = I - \tfrac{1}{n}\mathbf{1}\mathbf{1}^\top.
\)
Consequently, the attention output satisfies
\(O = PV \approx AV + TQK^\top V\),
where \(T = \tfrac{1}{\tau n \sqrt{d_k}}\,C\).
\end{assumption}
\end{tcolorbox}

\begin{tcolorbox}[colback=lightgray, colframe=lightgray, sharp corners=all, boxrule=0mm, boxsep=0.5mm, left=1.5mm, right=1.5mm, top=1.5mm, bottom=1.5mm]
\begin{assumption}[Linear Output Transformations]\label{assump:out}
The attention output \(O\) is passed through two linear mappings:
\(H = OW_O\) and \(Z = HW\).
The model prediction is obtained via
\(\Pi = \mathrm{softmax}(Z)\),
which represents the token-level probability distribution.
\end{assumption}
\end{tcolorbox}

\begin{tcolorbox}[colback=lightgray, colframe=lightgray, sharp corners=all, boxrule=0mm, boxsep=0.5mm, left=1.5mm, right=1.5mm, top=1.5mm, bottom=1.5mm]
\begin{assumption}[Upstream Gradients and Mismatch Tensor]\label{assump:grad}
Given the ground-truth label matrix \(Y\),
the upstream gradients are defined as
\(G_Z = \Pi - Y\),
\(G_H = G_Z W^\top\),
and \(G_O = G_H W_O^\top\).
We further define the mismatch tensor
\(R = (\Pi - Y)M\),
where \(M = W^\top W_O^\top\).
\end{assumption}
\end{tcolorbox}

\begin{tcolorbox}[colback=lightgray, colframe=lightgray, sharp corners=all, boxrule=0mm, boxsep=0.5mm, left=1.5mm, right=1.5mm, top=1.5mm, bottom=1.5mm]
\begin{assumption}[Regularity Conditions]\label{assump:reg}
All expectations involved in subsequent derivations are assumed to exist,
and the per-sample gradients are square-integrable.
\end{assumption}
\end{tcolorbox}

\begin{proof}[Per-sample gradients and proof of Theorem~3.2 ]
With $dL=\langle G_O,dO\rangle$ and $O\approx AV+TQK^\top V$, the per-sample gradients are
\[
\left\{
\begin{aligned}
\frac{\partial L}{\partial V} &= A^\top G_O + KQ^\top T^\top G_O,\\[3pt]
\frac{\partial L}{\partial Q} &= (T^\top G_O)\,V^\top K,\\[3pt]
\frac{\partial L}{\partial K} &= V\,G_O^\top T\,Q,\\[3pt]
\frac{\partial L}{\partial W} &= H^\top G_Z
\end{aligned}
\right.
\]

\vspace{4pt}

\[
\left\{
\begin{aligned}
\frac{\partial L}{\partial W_V} &= X^\top\!\big(A^\top G_O + KQ^\top T^\top G_O\big),\\[3pt]
\frac{\partial L}{\partial W_Q} &= X^\top\!\big((T^\top G_O)\,V^\top K\big),\\[3pt]
\frac{\partial L}{\partial W_K} &= X^\top\!\big(V\,G_O^\top T\,Q\big),\\[3pt]
\frac{\partial L}{\partial W_O} &= O^\top G_H
\end{aligned}
\right.
\]


From Assumption~\ref{assump:grad}, the upstream gradient can be written as
\(G_O = (\Pi - Y)W^\top W_O^\top = R\).
Substituting this into the above expressions shows that
all per-sample gradients are \emph{linear} functions of \(R\):
\[
\nabla_{W_b}L(x,y;\theta)
~=~ \mathsf{Lin}_b(X,Q,K,V,T)\,[\,R(x,y)\,],
\]
where \(\mathsf{Lin}_b(\cdot)\) denotes a matrix-valued linear operator determined
only by the forward pass variables \(X,Q,K,V,T\). Using the identity \(\mathrm{vec}(UGV)=(V^\top\!\otimes U)\mathrm{vec}(G)\),
each matrix gradient can be rewritten in vectorized form as
\[
\begin{aligned}
g_b(s) &:= \mathrm{vec}\!\big(\nabla_{W_b}L(x,y;\theta)\big)
~=~ \mathcal{L}_b(x)\,\rho(s),\\[3pt]
\rho(s) &:= \mathrm{vec}\!\big(R(s)\big).
\end{aligned}
\]
Here, $\mathcal{L}_b(x)$ absorbs all Kronecker factors
(e.g., $X^\top$, $T^\top$, $K$, $Q$) from the explicit gradient expressions.
Hence, for every parameter block \(b\),
the sample-wise gradient is a linear transformation of the mismatch vector~$\rho(s)$.

Define the expected gradient under a data distribution \(P\) as
\(\bar g_b(P):=\mathbb{E}_{s\sim P}[g_b(s)]\).
By linearity of expectation (Bochner integral in finite dimensions),
\[
\bar g_b(P_1)-\bar g_b(P_2)
~=~\int g_b(s)\,(P_1-P_2)(ds).
\]

The inner product between two per-sample gradients naturally defines
\[
\begin{aligned}
k_b(s,s') &:= \langle g_b(s),\, g_b(s') \rangle \\[3pt]
&= \rho(s)^\top\, \mathcal{L}_b(x)^\top \mathcal{L}_b(x')\, \rho(s').
\end{aligned}
\]
Because $k_b$ is an inner product in feature space, it is positive semidefinite. Applying Fubini--Tonelli and the bilinearity of the inner product yields
\[
\begin{aligned}
&\big\|\bar g_b(P_1)-\bar g_b(P_2)\big\|_2^2\\
&=\Big\langle\int g_b\,d(P_1\!-\!P_2),\ \int g_b\,d(P_1\!-\!P_2)\Big\rangle\\
&=\iint \langle g_b(s),g_b(s')\rangle\,(P_1\!-\!P_2)(ds)\,(P_1\!-\!P_2)(ds')\\
&=\mathbb{E}_{P_1,P_1}[k_b]+\mathbb{E}_{P_2,P_2}[k_b]-2\,\mathbb{E}_{P_1,P_2}[k_b],
\end{aligned}
\]
which is exactly $\mathrm{MMD}^2_{k_b}(P_1,P_2)$ by definition.
Finally, the positive semidefiniteness of $k_b$ follows from
\[
\sum_{i,j}\alpha_i\alpha_j k_b(s_i,s_j)
=\Big\|\sum_i \alpha_i g_b(s_i)\Big\|_2^2\ge 0.
\]
This completes the proof.
\end{proof}

\end{document}